\newcommand{\proj}{\operatorname{proj}}
\newcommand{\R}{\mathbb{R}}
\newcommand{\N}{\mathbb{N}}
\newcommand{\G}{\mathcal{G}}
\DeclareMathOperator*{\argmin}{arg\,min}
\DeclareMathOperator*{\argmax}{arg\,max}
\newtheorem{theorem}{Theorem}
\newtheorem{lemma}[theorem]{Lemma}
\newtheorem{definition}[theorem]{Definition}
\newcommand{\perm}{\mathbb{P}}
\newcommand{\diag}{\operatorname{diag}}
\newcommand{\matI}{\mathbf{I}}
\newcommand{\onevec}{\mathbf{1}}
\newcommand{\zerovec}{\mathbf{0}}
\renewcommand{\paragraph}{\textbf}
\journal{Pattern Recognition}
\def\ps@pprintTitle{%
 \let\@oddhead\@empty
 \let\@evenhead\@empty
 \def\@oddfoot{}%
 \let\@evenfoot\@oddfoot}
\begin{document}

\begin{frontmatter}

\title{Synchronisation of Partial Multi-Matchings via Non-negative Factorisations}

\author{Florian Bernard$^{1}$, Johan Thunberg$^{2,3}$, Jorge Goncalves$^{3}$, Christian Theobalt$^{1}$}

\address{$^1$MPI Informatics, Saarland Informatics Campus ~ $^2$Halmstad University ~ $^3$LCSB Luxembourg}

\begin{abstract}
In this work we study permutation synchronisation for the challenging case of partial permutations, which plays an important role for the problem of matching multiple objects (e.g.~images or shapes). The term synchronisation refers to the property that the set of pairwise matchings is cycle-consistent, i.e.~in the full matching case all compositions of pairwise matchings over cycles must be equal to the identity. Motivated by clustering and matrix factorisation perspectives of cycle-consistency, we derive an algorithm to tackle the permutation synchronisation problem based on non-negative factorisations. In order to deal with the inherent non-convexity of the permutation synchronisation problem, we use an initialisation procedure based on a novel rotation scheme applied to the solution of the spectral relaxation. Moreover, this rotation scheme facilitates a convenient Euclidean projection to obtain a binary solution after solving our relaxed problem. In contrast to state-of-the-art methods, our approach is guaranteed to produce cycle-consistent results. We experimentally demonstrate the efficacy of our method and show that it achieves better results compared to existing methods.
\end{abstract}

\begin{keyword}
partial permutation synchronisation \sep multi-matching \sep spectral decomposition \sep non-negative matrix factorisation
\end{keyword}

\end{frontmatter}
\section{Introduction}
The problem of matching features across images or shapes is a fundamental topic in pattern recognition and vision
and has a high relevance
in a wide range of problems. Potential applications include shape deformation model learning \cite{Cootes:1992uw,Heimann:2009kv},
object tracking, 3D reconstruction, graph matching, %
or image registration. The fact that many tasks that seek for a  matching between a pair of objects can be formulated as the NP-hard quadratic assignment problem (QAP) \cite{Sahni:1976gt} illustrates the difficulty of matching problems. 
The more general problem of matching an entire collection of objects, rather than a pair of objects, is referred to as
\emph{multi-matching}. In general, such multi-matching problems are computationally at least as difficult as pairwise matching problems, as they can be phrased in terms of simultaneously solving multiple pairwise matching problems that are coupled via consistency constraints. Using such couplings of pairwise problems is a common approach for solving multi-matching problem in practice \cite{Kezurer:2015,Yan:2015vc,Yan:2016vf,Bernard:2018}.

Due to the importance and practical relevance of making use of pairwise matchings to solve multi-matching problems,
in this work we focus on studying \emph{permutation synchronisation} methods. The aim of these methods is to process a given set of ``noisy'' pairwise matchings such that \emph{cycle-consistency} is achieved. In the case of full matchings, cycle-consistency refers to the property that compositions of pairwise matchings over cycles must be equal to the identity matching.
Synchronisation methods have been studied extensively both in the context of multi-matching (e.g.~\cite{Nguyen:2011eb,Pachauri:2013wx,Huang:2013uk, Chen:2014uo,Shen:2016wx,Tron:kUBrCZhd,Maset:YO8y6VRb,Schiavinato:2017fr}) as well as for general transformations (e.g.~\cite{Govindu:2004jx,Hadani:2011hb,Singer:2011ba,Chatterjee:2013vi,Bernard_2015_CVPR,Arrigoni:2017ut,Thunberg:2017kka,Wang:2013tq}).
One can interpret the synchronisation methods as a denoising procedure, where the wrong matchings (i.e.~the noise) that account for cycle inconsistencies in the set of pairwise matchings are to be filtered out.

Most commonly, the synchronisation of pairwise matchings is formulated as an optimisation problem over permutation matrices. In the works by Pachauri et al. \cite{Pachauri:2013wx} and Shen et al. \cite{Shen:2016wx}, solutions for the synchronisation of permutation matrices based on a spectral factorisation are presented. A major limitation of these works is %
that the method is only suitable for \emph{full} permutation matrices, i.e.~it is assumed that all features are present in all objects (cf.~Sec.~\ref{sec:existingSpectral}). While this limitation has recently been addressed in the work by Maset et al.~\cite{Maset:YO8y6VRb},
in their work they do not aim for cycle-consistency. Since the (unknown) true matchings must be cycle-consistent, we argue that cycle-consistency is essential and should be strived for.

The main objective of this work is to present a novel approach for the synchronisation of pairwise matchings that addresses the mentioned shortcomings of existing methods. 
To this end, we present an improved formulation for the permutation synchronisation problem that finds a non-negative approximation of the range space of the pairwise matching matrix.
In contrast to \cite{Pachauri:2013wx}, our approach can handle \emph{partial} pairwise matchings. %
 Moreover, unlike \cite{Maset:YO8y6VRb,zhou2015multi}, our approach guarantees cycle-consistent matchings.

\paragraph{Main contributions:} The main contributions of our work on the sychronisation of partial permutations can be summarised as follows:
  (i) Motivated by clustering and matrix factorisation perspectives of cycle-consistency in the set of pairwise matchings,
  we derive an \emph{improved algorithm for permutation synchronisation} based on non-negative factorisations.
   (ii) While the proposed formulation is non-convex, we propose a \emph{novel procedure for initialising} the variables. 
  (iii) Moreover, we present a  \emph{novel projection procedure} to obtain a binary solution from the relaxed formulation. %
  (iv) Experimentally we demonstrate that our method achieves \emph{superior results} on synthetic and real datasets, while addressing the aforementioned shortcomings.

\section{Related Work}\label{sec:rel}
In this section we discuss prior work that is most relevant to our  approach.

\paragraph{Transformation synchronisation:} 
Synchronisation methods have been studied for various kinds of transformations. The synchronisation of (special) orthogonal transformations has been considered based on spectral methods \cite{Singer:2011ba,Bandeira:2013up,Wang:2013tq}, semidefinite programming \cite{Singer:2011ba,Chaudhury:2013un,Wang:2013tq}, or Lie-group averaging \cite{Govindu:2004jx,Chatterjee:2013vi}. The case of rigid-body transformations, which is particularly relevant in the context of vision, has been studied in semidefinite programming frameworks \cite{Chaudhury:2013un,Bandeira:2014wy}, as well as in the context of spectral approaches \cite{Bernard_2015_CVPR,Arrigoni:2016cc}. In general, spectral approaches are more scalable compared to semidefinite programming methods.
In addition to centralised methods, distributed synchronisation methods have also been presented, both for the case of undirected graphs \cite{tron2014distributed}, as well as for the more general case of directed graphs \cite{Thunberg:2017kka}.

\paragraph{Permutation synchronisation:}
Since permutation matrices are a subset of the orthogonal matrices, one could consider permutation synchronisation as a special case of the orthogonal synchronisation methods. However, in general the permutation synchronisation problem appears to be more difficult due to the additional binary constraints. Moreover, if one considers partial permutations, this interpretation as special case is no longer valid.
The synchronisation of full permutation matrices has been presented by Pachauri et al.~\cite{Pachauri:2013wx}, with follow-up works that consider partial matchings \cite{Arrigoni:2017ut,Maset:YO8y6VRb}. We devote Sec.~\ref{sec:existingSpectral} to an in-depth explanation of these approaches, where we also identify their main weaknesses upon which our approach improves.

\paragraph{Matching problems:}
Matching problems between two objects are commonly formulated in terms of the linear assignment problem (LAP) \cite{Burkard:2009hp,Munkres:1957ju} or the quadratic assignment problem (QAP) \cite{Koopmans:1957gf,Lawler:1963wn,Burkard:2009hp,Loiola:2007}. When one matches graphs, the LAP corresponds to matching node attributes only, whereas the QAP matches node attributes as well as edge attributes \cite{Zhou:2013ty}.
Computationally, the difference between both is that the LAP is solvable in polynomial time (e.g.~via the Hungarian method \cite{Munkres:1957ju} or the Auction algorithm \cite{Bertsekas:1998vt}), whereas the QAP is NP-hard \cite{Sahni:1976gt}. Hence, for solving QAPs in practice, existing approaches either resort to (expensive) branch and bound methods \cite{Bazaraa:1979fh}, or to approximations, e.g.~based on spectral methods \cite{Leordeanu:2005ur,Cour:2006un}, dual decomposition \cite{Torresani:2013gj}, linear relaxations \cite{swoboda2017a,swoboda2017b}, convex relaxations \cite{Zhao:1998wc,Schellewald:2005up,Fogel:2013wta,Fiori:2015cw,Kezurer:2015,Aflalo:2015hda,Dym:2017ue,Bernard:2018}, path following \cite{Zaslavskiy:2009fq,Zhou:2013ty,Jianga:vg}, or alternating directions \cite{LeHuu:2016uq}.

\paragraph{Multi-matching problems:}
The problem of matching more than two objects can be phrased as multi-graph matching (MGM) problems \cite{Williams:1997vj,Yan:2013ve,Yan:2015wr,Huang:2013uk,Yan:2015wr,Kezurer:2015,Shi:2016tj,Bernard:2018,Hu:2018vb}, which in general are computationally very challenging. If one uses first-order terms only, so that geometric relations between the features are not explicitly taken into account, multi-matching can efficiently be solved as (constrained) clustering problem \cite{yan2016constrained,Tron:kUBrCZhd}. 
The approaches described in \cite{Kezurer:2015,Yan:2015vc,Yan:2016vf,Bernard:2018} phrase MGM in terms of multiple pairwise matchings. The work in \cite{zhou2015multi} is closely related to the permutation synchronisation methods \cite{Pachauri:2013wx,Arrigoni:2017ut,Maset:YO8y6VRb}, as the authors formulate the multi-matching problem directly in terms of a low-rank optimisation problem for a given set of pairwise matchings. However, the so-obtained matchings are generally not cycle-consistent.

\section{Background}

\paragraph{Notation:} Let $\onevec_{pq}$ and $\zerovec_{pq}$ denote $p \times q$ matrices comprising of ones and zeros, and we write $\onevec_p$ and $\zerovec_p$ for $q=1$.
We use $X_+$ to denote that all negative elements in the matrix $X$ are replaced by $0$.  For an integer $i \in \N$, we define $[i] := \{1,\ldots,i\}$. 
For a $p \times q$ matrix $X$, and the index sets $A \subseteq [p], B \subseteq [q]$, we denote by $X_{A,B}$ the $\vert A \vert \times \vert B \vert$ submatrix of $X$ that is formed from the rows with indices in $A$ and the columns with indices in $B$. We use the colon notation to denote the full index set, e.g.~$X_{:,B} = X_{A,B}$ for $A = [p]$. For matrices $A_{ij}, i \in [p], j \in [q]$ of appropriate sizes, we use the shorthand notation $[A_{ij}]_{ij}$ to denote the block matrix
\begin{align}
	[A_{ij}]_{ij} := \begin{bmatrix}
		A_{11} & \hdots & A_{1q} \\
		\vdots & \ddots & \vdots \\
		A_{p1} & \hdots & A_{pq}
	\end{bmatrix}\,.
\end{align}
The set of (full) permutation matrices is defined as
\begin{align}
  \perm_{p} := \{ X \in \{0,1 \}^{p \times p} ~:~ X \onevec_p = \onevec_p, \onevec_p^T X  = \onevec_p^T\} \,.
\end{align}
The set of $p \times q$ partial permutation matrices $\perm_{pq}$ is defined as
\begin{align}\label{eq:partperm}
  \perm_{pq} := \{ X \in \{0,1 \}^{p \times q} ~:~ X \onevec_q \leq \onevec_p, \onevec_p^T X \leq \onevec_q^T\} \,.
\end{align}

\subsection{Partial Permutation Synchronisation}
Let $k \in \N, k > 2$ be the total number of objects (e.g.~images or shapes) that are to be matched. We assume that in object $i \in \N$, with $i \in [k]$, there are $m_i \in \N$ features, where the total number of features is denoted as $m = \sum_{i=1}^k m_i$.
 Moreover, we assume that there is a total number of $d \in \N$ distinct features across all objects $i \in [k]$ in the \emph{universe}.   
We use $P_{ij} \in \perm_{m_im_j}$ to denote a (partial) permutation   that encodes the matching between the $i$-th and the $j$-th object (Fig.~\ref{fig:illustration}(i)). The element $(P_{ij})_{pq} \in \{0,1\}$ at position $(p,q)$, $p \in [m_i], q \in [m_j]$ of matrix $P_{ij}$ is $1$ iff the $p$-th feature of object $i$ is matched to the $q$-th feature of object $j$.
For $P_{ij} \in \perm_{m_i m_j}$, $W := [P_{ij}]_{i,j \in [k]} \in [\perm_{m_i m_j}]_{i,j \in [k]}$ is the $m {\times} m$ matrix of pairwise (partial) matchings. 

\paragraph{Cycle-consistency of partial matchings:} In contrast to full matchings, where cycle-consistency refers to the property that compositions of pairwise matchings over cycles must be \emph{equal} to the identity matching, in the case of partial matchings one only requires that compositions of pairwise matchings over cycles must be \emph{a subset of the identity matching}. Due to potential pairwise \emph{non-matchings} (i.e.~zero rows or columns in $P_{ij}$) along a cyclic path, some of the original matchings may vanish. A convenient way to define \emph{cycle-consistency} of partial matchings is based on universe features:
\begin{definition} \label{def:transconspartial}
  The matrix of pairwise (partial) matchings $W = [P_{ij}]_{i,j \in [k]}$
  is said to be \emph{cycle-consistent}  (or \emph{synchronised}) iff there exists a set $\{P_i \in \perm_{m_id}: i \in [k],  P_i \onevec_d = \onevec_{m_i} \}$ such that for all $i,j \in [k]$ it holds that $P_{ij} =  P_i P_j^T$. 
\end{definition}
The {\emph{object-to-universe matching}} matrices $P_i \in \perm_{m_id}$ can be interpreted as assignments of each feature of the $i$-th object to one of the features in the universe (Fig.~\ref{fig:illustration}(ii)), where the $\ell$-th row of $P_i$ is the assignment of the $\ell$-th feature of object $i$ to a particular feature in the universe.
The requirement $P_i \onevec_d = \onevec_{m_i}$ ensures that each feature of object $i$ is assigned to exactly one feature of the universe. 
\begin{figure*}[t!]
\centering
  \includegraphics[width=0.9\linewidth]{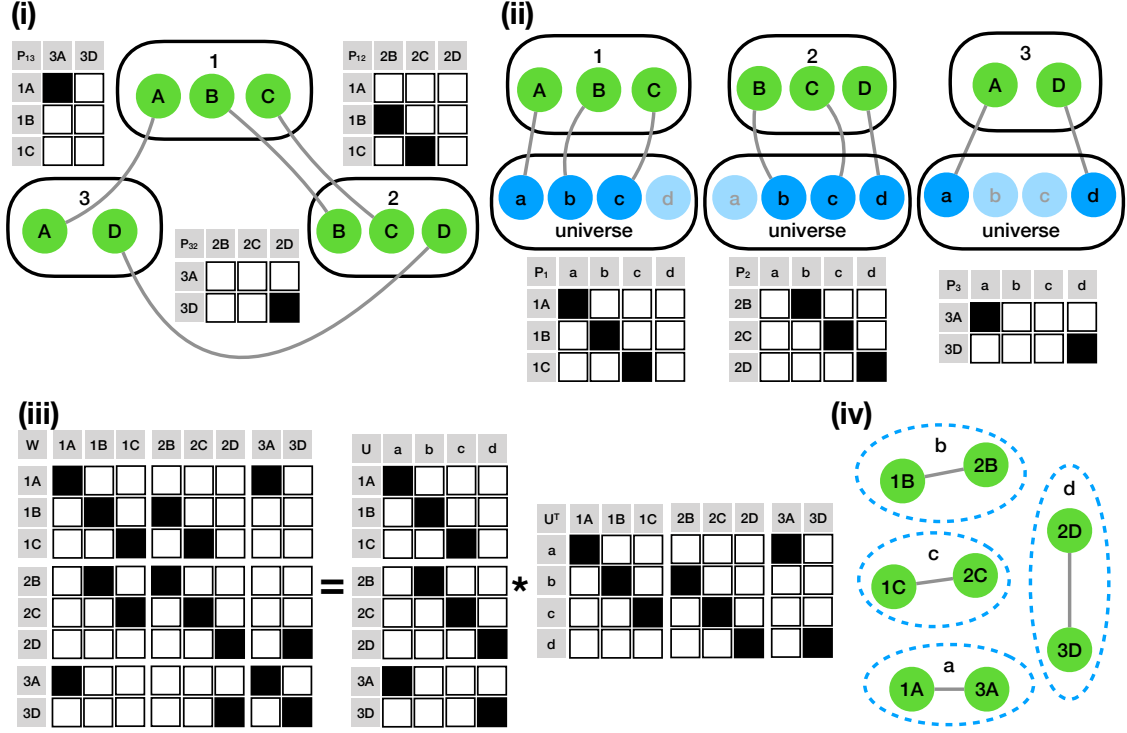}
  \caption{Conceptual illustration  of (i) relative matchings, (ii) absolute matchings, (iii) the matrix factorisation perspective, and (iv) the graph of pairwise matchings. The objects are denoted by $1$, $2$ and $3$, where corresponding features across objects are labelled by the same letter from A to D. The \emph{relative} matchings are represented by the permutation matrices $P_{ij}$ (e.g.~$P_{12}$), and the \emph{absolute} matchings are represented by the permutation matrices $P_i$ (e.g.~$P_1$) that match each feature to one of the universe features a, b, c, d. Since cycle-consistency holds in this case, the matrix $W$ in (iii) can be factorised into $UU^T$ (Lemma~\ref{lem:transconsmatrix}); and the graph of pairwise matchings in (iv) is a union of the disconnected cliques a, b, c and d (Lemma~\ref{lem:graph}).}
  \label{fig:illustration} 
\end{figure*}
For $\mathcal{U} :=\{U \in [\perm_{m_i d}]_{i\in [k]} ~:~U \onevec_d = \onevec_m\} \subset \R^{m \times d}$, %
one can characterise cycle-consistency of partial matchings in terms of a low-rank factorisation \cite{Maset:YO8y6VRb}, which is also illustrated in Fig.~\ref{fig:illustration}(iii):
\begin{lemma} \label{lem:transconsmatrix} %
The pairwise (partial) matching matrix $W$ is cycle-consistent iff there exists a matrix $U \in \mathcal{U}$, such that $W = UU^T$.
\end{lemma}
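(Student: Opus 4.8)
The plan is to prove the biconditional directly in both directions, essentially by unfolding Definition~\ref{def:transconspartial} and the definition of the set $\mathcal{U}$; the whole argument rests on one bookkeeping identity for block matrices. First I would record the preliminary observation that if $U$ is written as a block column $U = [P_i]_{i\in[k]}$ with $P_i \in \R^{m_i \times d}$, then $UU^T = [P_iP_j^T]_{i,j\in[k]}$, i.e.~the $(i,j)$-th block of $UU^T$ is exactly $P_i P_j^T$. I would also record that, by definition, $U\in\mathcal{U}$ is equivalent to the conjunction of $P_i\in\perm_{m_id}$ for all $i$ together with $U\onevec_d = \onevec_m$, and that the latter, read blockwise as $[P_i\onevec_d]_{i\in[k]} = \onevec_m$, is in turn equivalent to $P_i\onevec_d = \onevec_{m_i}$ for all $i$. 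These are precisely the conditions imposed on the family $\{P_i\}$ in Definition~\ref{def:transconspartial}, so the two notions are matched up term by term.

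For the forward direction I would assume $W$ is cycle-consistent, take a witnessing family $\{P_i\}$ from Definition~\ref{def:transconspartial}, stack the $P_i$ vertically into $U := [P_i]_{i\in[k]} \in \R^{m\times d}$, and then (a) conclude $U\in\mathcal{U}$ from the preliminary remarks, and (b) compare blocks to get that the $(i,j)$-th block of $UU^T$ is $P_iP_j^T = P_{ij}$, hence $UU^T = [P_{ij}]_{i,j\in[k]} = W$. For the converse I would assume $W = UU^T$ with $U\in\mathcal{U}$, write $U$ in block form $U = [P_i]_{i\in[k]}$, read off $P_i\in\perm_{m_id}$ and $P_i\onevec_d = \onevec_{m_i}$ from $U\in\mathcal{U}$, and then compare the $(i,j)$-th blocks of $W = UU^T$ to obtain $P_{ij} = P_iP_j^T$ for all $i,j$; this exhibits a family satisfying Definition~\ref{def:transconspartial}, so $W$ is cycle-consistent.

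There is no genuine analytic difficulty here; the statement is really a reformulation. The only thing that needs care — and what I would flag as the ``main obstacle'' — is the block-index bookkeeping: ensuring the vertical block sizes $m_1,\ldots,m_k$ are tracked correctly through the stacking of $U$ and through the product $UU^T$, and that the single constraint $U\onevec_d = \onevec_m$ is correctly decomposed into the $k$ per-object constraints $P_i\onevec_d = \onevec_{m_i}$. I would also note in passing that in the converse direction no separate verification that $P_iP_j^T \in \perm_{m_im_j}$ is needed, since $W$ is assumed from the outset to be a pairwise (partial) matching matrix, i.e.~each of its blocks already lies in $\perm_{m_im_j}$.
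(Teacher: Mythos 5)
Your proposal is correct and follows essentially the same route as the paper's proof: stack the object-to-universe matrices $P_i$ into $U$, match the membership conditions of $\mathcal{U}$ with those in Definition~\ref{def:transconspartial}, and compare the blocks of $UU^T$ with the blocks $P_{ij}$ of $W$ in both directions. The paper states this more tersely, while you spell out the blockwise bookkeeping explicitly, but there is no substantive difference.
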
  
\begin{proof}
To prove the statement we identify $U = \begin{bmatrix}
    P_1^T & P_2^T &\cdots & P_k^T
  \end{bmatrix}^T \in \R^{m \times d}$. One can easily see, cf.~Def.~\ref{def:transconspartial}, that cycle-consistency implies that there exists a $U$ that has the desired properties.
 Likewise, if a $U \in \mathcal{U}$ with $W=UU^T$ is given, one can see that the blocks $\{P_i \}$ of $U$ satisfy $P_i \onevec_d = \onevec_{m_i}$ as well as $P_{ij} = P_iP_j^T$. %
\end{proof}

\paragraph{Optimisation problem:}  %
Lemma~\ref{lem:transconsmatrix} shows that in the noise-free case, the matrix of pairwise matchings $W$ can be factorised as $W=UU^T$. %
Given a \emph{noisy} $W$, a straightfoward way to phrase the permutation synchronisation problem is to consider the constrained nonlinear least-squares problem
\begin{align} \label{eq:syncOpt}
  \argmin_{U \in \mathcal{U}} \| W{-}UU^T \|_F^2 \,.
\end{align}
Since Problem~\eqref{eq:syncOpt} is non-convex, %
finding an exact solution is intractable for reasonably large instances. Hence, various simplifications  have been considered in the literature, as we describe next.

\subsection{Spectral Relaxations}\label{sec:existingSpectral}
In this section we summarise the key ideas of existing spectral relaxations, where we also identify their shortcomings when synchronising \emph{partial} permutations.
In order to avoid confusion, we explicitly mention that the reader should carefully distinguish between the $d \times d$ matrix $U^TU$ and the $m \times m$ matrix $UU^T$, as both terms will appear below.

\paragraph{Full matchings:} In the case of (cycle-consistent) full matchings,  it holds that $U^T U = k \matI_d$. Thus, $\| W{-}UU^T \|_F^2 = \langle W,W \rangle {-} 2\langle W, UU^T \rangle {+} \langle UU^T,UU^T \rangle = \text{const}{-} 2\langle W, UU^T \rangle  $. Hence, for  full matchings, the authors of \cite{Pachauri:2013wx} relax the constraint $U \in \mathcal{U}$ to $U^TU {=} k\matI_d$, and then solve Problem~\eqref{eq:syncOpt} with the relaxed constraints by eigendecomposition, followed by a projection step.

\paragraph{Partial matchings:} For partial matchings, the authors of \cite{Maset:YO8y6VRb} propose to maximise $\langle W, UU^T \rangle $ based on eigendecomposition. However, in the \emph{partial} matchings case, in general  $U^TU {\neq} k\matI_d$, so that the objective $\langle W, UU^T \rangle $ differs from the objective in Problem~\eqref{eq:syncOpt}. 
Instead, for $U \in \mathcal{U}$ the objective $\langle W, UU^T \rangle$ counts the number of \emph{equal matchings} between the matrices $P_{ij}$ and $P_i P_j^T$ for all $i,j$.
A further difficulty with \emph{partial} matchings is related to the necessary projection due to the relaxation of the constraints, as we describe next.

\paragraph{Projection:} \label{sec:proj}
When the constraint $U \in \mathcal{U}$ is replaced by $U^T U = k \matI_d$, after obtaining $U$ based on the spectral decomposition of $W$, one needs to project $U$ onto the set $\mathcal{U}$. 
Since for any orthogonal matrix $Q \in \R^{d \times d}$ it holds that $(UQ)(UQ)^T {=} UQQ^TU = UU^T$, the factorisation $UU^T$ is only determined up to such a matrix $Q$. Hence, for projecting the blocks of $U$, one can choose a suitable orthogonal matrix $Q$ in order to simplify the projection. %
For the full matching case, the authors of \cite{Pachauri:2013wx} suggest to perform Euclidean projections of the $d \times d$ blocks of $U Q$ for the choice $Q = P_1^T$. Under the assumption that $W$ is relatively close to the form $UU^T$,  the matrix $P_1$ is near-orthogonal, such that the first block of $UQ$ is close to the identity matrix, while the remaining blocks of $UQ$ shall become close to permutation matrices.

Since for partial permutations the matrices $P_i$ are of dimension $m_i \times d$, where generally $m_i {<} d$, the assumption that the $P_i$ are near-orthogonal breaks, and thus such a procedure is not applicable anymore (cf.~Sec.~\ref{sec:projAfterRot} for details).
As workaround, instead of projecting the blocks of $U$ onto $\mathcal{U}$, the authors of \cite{Maset:YO8y6VRb} perform a projection of the blocks of $UU^T$, such that the $m \times m$ matrix $\proj(UU^T)$ is obtained. 
While it is reasonable (under small noise assumptions) to assume that the blocks of $UU^T$ are close to being (partial) permutation matrices, in this approach one cannot guarantee that the matrix $\proj(UU^T)$ satisfies the conditions in Lemma~\ref{lem:transconsmatrix}, and thus, cycle-consistency is violated. 

Another approach for the projection is pursued by the authors of \cite{zhou2015multi,Arrigoni:2017ut}, where a greedy strategy is employed for obtaining blocks of partial permutations from the matrix of eigenvectors $U$.

\subsection{Clustering Perspective}\label{sec:clustering}
Here, we summarise the clustering perspective of synchronisation (cf.~\cite{Arrigoni:2017ut,Tron:kUBrCZhd}), which %
will become useful to motivate our approach in Sec.~\ref{sec:ourapproach}.
For that, we consider the \emph{graph of pairwise matchings} $\G := \G(W)$ (cf.~Fig.~\ref{fig:illustration}(iv) for an illustration).  %
The (non-negative) $m \times m$ matrix $W$ is considered as the adjacency matrix of $\G$, so that $\G$ comprises $m$ nodes (recall that $m=\sum_i m_i$). The value $(W)_{pq} \in \R$ at position $(p,q)$ of $W$ denotes the edge weight that represents the affinity of nodes $p \in [m]$ and $q \in [m]$, where $(W)_{pq}=0$ means that there is no edge. Note that w.l.o.g. we assume $(W)_{pp}=1$ for all $p \in [m]$.
As shown by Tron et al. \cite{Tron:kUBrCZhd}, and illustrated in Fig.~\ref{fig:illustration}(iv), cycle-consistency can compactly be formulated in terms of the graph of pairwise matchings:
\begin{lemma} \label{lem:graph}
  The graph of pairwise matchings $\G(W)$ is cycle-consistent iff it is a union of disconnected cliques.
\end{lemma}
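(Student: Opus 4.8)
The plan is to prove both directions by connecting cycle-consistency to the algebraic characterisation from Lemma~\ref{lem:transconsmatrix} and then translating that into a statement about connected components of $\G(W)$. The key observation is that the diagonal blocks of $W = UU^T$ are $P_iP_i^T$, which (since $P_i \in \perm_{m_i d}$ with $P_i \onevec_d = \onevec_{m_i}$) equal $\matI_{m_i}$; thus the assumption $(W)_{pp}=1$ is consistent, and more importantly, the entry $(W)_{pq}$ for $p$ in object $i$ and $q$ in object $j$ equals $1$ precisely when the universe feature assigned to $p$ (the unique column of $P_i$ with a $1$ in row $p$) coincides with the one assigned to $q$. So under cycle-consistency, $W$ is exactly the $0/1$ matrix whose $(p,q)$ entry is $1$ iff nodes $p$ and $q$ are mapped to the same universe feature. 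That is the bridge between the two perspectives.

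For the forward direction ($\Rightarrow$): assume $\G(W)$ is cycle-consistent, hence by Lemma~\ref{lem:transconsmatrix} there is $U \in \mathcal{U}$ with $W = UU^T$. Define a map $c \colon [m] \to [d]$ sending each node to its universe feature, i.e.\ $c(p)$ is the unique index with $(U)_{p,c(p)} = 1$. By the computation above, $(W)_{pq} \neq 0$ iff $c(p) = c(q)$. Therefore the edge relation of $\G(W)$ is exactly the equivalence relation induced by the fibres of $c$: within a fibre every pair of nodes is joined by an edge (a clique), and between distinct fibres there are no edges. Hence $\G(W)$ is a disjoint union of cliques, one per universe feature actually used (fibres may be empty for unused universe features, contributing nothing).

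For the converse ($\Leftarrow$): suppose $\G(W)$ is a union of disconnected cliques $C_1, \ldots, C_r$. First note $W$ is $0/1$: a clique's edges are all present, non-edges are absent, and we take weight $1$ on present edges (this uses the normalisation $(W)_{pp}=1$ and, implicitly, that $W$ arising from matchings is $0/1$; if $W$ is merely non-negative one reads off the clique structure from the support and works with the support matrix). Assign universe features by picking $d \geq r$ and setting $c(p) = \ell$ whenever $p \in C_\ell$; build $U \in \R^{m\times d}$ with $(U)_{p,c(p)} = 1$ and zeros elsewhere. Each row of $U$ has exactly one $1$, so $U\onevec_d = \onevec_m$; the block $P_i$ (rows of $U$ indexed by object $i$) lies in $\{0,1\}^{m_i\times d}$ with row sums $1$, and column sums at most $1$ provided no two features of the same object land in the same clique — which holds because within a single object the "identity matching" block $P_{ii} \subseteq \matI_{m_i}$ forces distinct features $p \neq p'$ of object $i$ to be non-adjacent, hence in different cliques. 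Thus $P_i \in \perm_{m_id}$ and $U \in \mathcal{U}$. Finally $(UU^T)_{pq} = 1$ iff $c(p)=c(q)$ iff $p,q$ in the same clique iff $(W)_{pq} = 1$, so $W = UU^T$ and Lemma~\ref{lem:transconsmatrix} gives cycle-consistency.

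The main obstacle I expect is the within-object argument in the converse: one needs the no-self-matching constraint (the diagonal blocks $P_{ii}$ are subsets of the identity, equivalently $(W)_{pq}=0$ for distinct $p,q$ in the same object) to guarantee the columns of each $P_i$ have at most one $1$, so that $P_i$ is a genuine partial permutation rather than an arbitrary $0/1$ matrix with unit row sums. This needs to be stated as a standing assumption on $W$ (analogous to $(W)_{pp}=1$), or derived from $W \in [\perm_{m_im_j}]_{ij}$ with $P_{ii} \preceq \matI_{m_i}$; everything else is a routine unwinding of definitions. A minor point is handling the non-negative-but-not-binary case, which is dispatched by passing to the support of $W$.
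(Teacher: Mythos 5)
Your argument is correct, but note that the paper does not actually prove this lemma itself --- it simply defers to Prop.~2 of Tron et al.~\cite{Tron:kUBrCZhd} --- so your self-contained derivation via Lemma~\ref{lem:transconsmatrix} is a genuinely different (and arguably more useful) route in the context of this paper. The bridge you build is the right one: since every $U \in \mathcal{U}$ has exactly one unit entry per row, $(UU^T)_{pq}$ is the indicator of ``$p$ and $q$ are assigned the same universe feature,'' so the edge relation of $\G(UU^T)$ is the equivalence relation given by the fibres of the assignment map, i.e.\ a disjoint union of cliques; conversely the clique partition hands you the assignment map back. You also correctly isolate the one non-trivial point in the converse, namely that two distinct features of the \emph{same} object must land in different cliques so that the columns of each $P_i$ sum to at most one. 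This does follow from the standing structure of $W$: each diagonal block $P_{ii}$ is an element of $\perm_{m_i m_i}$, and the normalisation $(W)_{pp}=1$ already places the single permitted $1$ of every row of $P_{ii}$ on the diagonal, forcing $P_{ii} = \matI_{m_i}$ and hence $(W)_{pq}=0$ for $p \neq q$ within an object --- so no extra assumption is needed beyond what the paper sets up. The only loose end is your ``pick $d \geq r$'': Definition~\ref{def:transconspartial} fixes $d$ in advance, so strictly one should either take the universe size to be the number of cliques or note that $r \leq d$ whenever $W$ admits any consistent completion; this is cosmetic. What your approach buys is a proof entirely internal to the paper's formalism; what the citation buys the authors is brevity and attribution of the result to its source.
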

\begin{proof}
  See Prop.~2 in \cite{Tron:kUBrCZhd}. %
\end{proof}

\begin{lemma}\label{lem:colClust}
 Let the graph of pairwise matchings $\G(W)$ be cycle-consistent so that it is a union of the disconnected cliques $C_i \subseteq [m]$, $i \in [d]$. It holds that all columns of the matrix $W_{:,C_i} \in \{0,1\}^{m \times \vert C_i \vert}$ are equal for a given $i \in [d]$. 
\end{lemma}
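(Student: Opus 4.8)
The plan is to unfold the definition of cycle-consistency via Lemma~\ref{lem:transconsmatrix} and exploit the block structure of the factor $U$. Since $\G(W)$ is cycle-consistent, Lemma~\ref{lem:transconsmatrix} gives a matrix $U = [P_1^T \cdots P_k^T]^T \in \mathcal{U}$ with $W = UU^T$. The columns of $U$ are indexed by the $d$ universe features, and each row of $U$ (a row of some block $P_i$) is a standard basis vector $e_\ell$ since $P_i \onevec_d = \onevec_{m_i}$ and $P_i \in \perm_{m_id}$. In particular, the nonzero entries of column $\ell$ of $U$ are exactly those rows $p \in [m]$ whose assigned universe feature is $\ell$; this set is precisely the clique $C_\ell$ (this should follow from the correspondence between cycle-consistency and the disconnected-clique structure in Lemma~\ref{lem:graph}, with cliques identified by shared universe feature).

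The key computation is then $W_{:,C_i} = (UU^T)_{:,C_i} = U (U^T)_{:,C_i} = U \, (U_{C_i,:})^T$. For $p \in C_i$, the $p$-th row of $U$ is $e_i^T$ (the $i$-th standard basis vector in $\R^d$), because $p$'s assigned universe feature is $i$. Hence $(U_{C_i,:})^T$ is the $d \times \vert C_i \vert$ matrix all of whose columns equal $e_i$. Therefore every column of $W_{:,C_i} = U (U_{C_i,:})^T$ equals $U e_i$, i.e.\ the $i$-th column of $U$ — they are all identical, and they lie in $\{0,1\}^m$ since $U$ is binary.

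First I would state the factorisation $W = UU^T$ from Lemma~\ref{lem:transconsmatrix} and fix the block notation $U = [P_i^T]_i^T$. Next I would establish that the clique $C_i$ consists exactly of the indices $p \in [m]$ for which row $p$ of $U$ equals $e_i^T$; this is the one place where I need to connect the graph-theoretic decomposition of Lemma~\ref{lem:graph} to the factorisation, and it is the step most worth spelling out carefully — one shows $(W)_{pq} = (UU^T)_{pq} = \langle u_p, u_q\rangle$ where $u_p, u_q$ are the (basis-vector) rows of $U$, so $p$ and $q$ are adjacent iff they share the same universe feature, which makes the cliques coincide with the universe-feature fibres. Finally I would carry out the column computation above to conclude that all columns of $W_{:,C_i}$ equal the $i$-th column of $U$.

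The main obstacle is purely bookkeeping: making precise that the clique indexing in the statement (the abstract $C_i$, $i \in [d]$, coming from Lemma~\ref{lem:graph}) can be matched up with the universe-feature indexing coming from the factorisation $U$, including handling the degenerate possibility that some universe feature is used by no object (an empty clique), for which the claim holds vacuously. No hard estimate or nontrivial inequality is involved; once the identification of cliques with columns of $U$ is in place, the result is immediate.
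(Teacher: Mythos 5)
Your proof is correct, but it takes a genuinely different route from the paper's. The paper argues purely combinatorially from the clique decomposition: it exhibits a permutation $P \in \perm_m$ with $PWP^T = \diag(\onevec_{c_1c_1},\ldots,\onevec_{c_dc_d})$ and reads off directly that the columns of $(PW)_{:,C_i}$ — and hence of $W_{:,C_i}$ — are equal; Lemma~\ref{lem:transconsmatrix} is never invoked. You instead route through the factorisation $W=UU^T$, observe that each row of $U$ is a standard basis vector so that $(W)_{pq}=\langle u_p,u_q\rangle$ makes the cliques coincide with the universe-feature fibres, and then compute $W_{:,C_i}=U\,(U_{C_i,:})^T$, all of whose columns equal the corresponding column of $U$. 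What each approach buys: the paper's is self-contained given the clique structure (it only needs the adjacency matrix and the convention $(W)_{pp}=1$), whereas yours additionally identifies the common column value as the indicator vector of the clique (a column of $U$), which ties the lemma more explicitly to the clustering interpretation the authors are after. The bookkeeping you flag — matching the abstract indexing of the $C_i$ to the universe-feature indexing of the columns of $U$ (in general $C_i$ corresponds to some feature $j(i)$ rather than to $i$ itself), and the vacuous case of an unused universe feature — is real but minor, and your plan handles it; the paper avoids it entirely by never introducing $U$.
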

\begin{proof}
\newcommand\mydots{\hbox to 1em{.\hss.\hss.}}
  We denote by $c_i$, $i \in [d]$, the number of elements in the $i$-th clique.
  Since $\G$ is a union of $d$ disconnected cliques, there is a permutation $P \in \perm_{m}$ such that $PWP^T$ is the block-diagonal matrix $PWP^T = \diag(\onevec_{c_1c_1}, \ldots,\onevec_{c_dc_d})$. Moreover, for $P$ it holds that
$\matI_{:,C_i} = P^T \matI_{:,A_i}$ for $A_i = \{d_i{+}1, d_i{+}2, \ldots, d_i{+}c_i\}$ with $d_i = \sum_{\ell = 1}^{i-1}c_{\ell}$. 
From $\matI_{:,C_i} = P^T \matI_{:,A_i}$ it follows that $(P W)_{:,C_i} = (P W)\matI_{:,C_i} = (PW P^T) \matI_{:,A_i} = \diag(\onevec_{c_1c_1}{,} \mydots{,}\onevec_{c_dc_d}) \matI_{:,A_i} {=} [\zerovec_{c_ic_1}^T{,}\mydots{,}\zerovec_{c_ic_{i-1}}^T{,} \onevec_{c_ic_i}^T{,} \zerovec_{c_ic_{i+1}}^T{,}\mydots{,} \zerovec_{c_ic_d}^T]^T$, which shows that the columns of $(P W)_{:,C_i}$ are equal. Hence, with $P W$ being a permutation of the rows of $ W$, the columns of $ W_{:,C_i}$ must also be equal.
Since cycle-consistency implies symmetry of $W$, the analogous statement also holds for the rows of $W$. 
\end{proof}

Lemma~\ref{lem:colClust} illustrates that one can cluster the columns (or rows) of $W$ to identify to which \emph{universe feature} they belong (cf.~Fig.~\ref{fig:illustration}(iv)).

\begin{figure} %
\centering
  \includegraphics[scale=1]{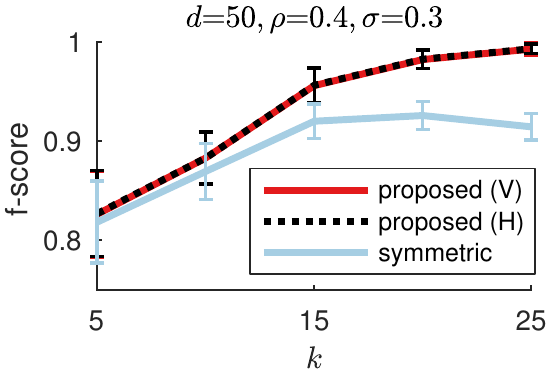}
  \vspace{-5mm}
  {\caption{Comparison of the proposed approach (when using either $V$ or $H$ to obtain the cycle-consistent matchings) with a \emph{symmetric} NMF~\cite{vandaele2016efficient}. While using an \emph{unsymmetric factorisation} is clearly advantageous, both $V$ and $H$ can be used equivalently.}
  \label{fig:symvsnonsym} 
  }
\end{figure}

\section{Proposed Approach}\label{sec:ourapproach}
A key idea of our approach is to formulate the permutation synchronisation problem in terms of a \emph{non-negative matrix factorisation} (NMF) \cite{Lee:uf}. To be more specific, we propose to solve 
\begin{align} \label{eq:nmfOpt}
  \argmin_{V \geq 0, H \geq 0} \| W{-} VH \|_F^2 \,,
\end{align}
where $V \in \R^{m \times d}$ and $H \in \R^{d \times m}$. 
Problem~\eqref{eq:nmfOpt} is a relaxation of Problem~\eqref{eq:syncOpt}, where the constraints $V=H^T$ are dropped, and the constraint set $\mathcal{U}$ is replaced by non-negativity constraints. 
At first sight it may appear unnatural that one aims for an \emph{unsymmetric} factorisation $VH$ of the symmetric matrix ${W}$.
However, we have found that this
is advantageous compared to a symmetric factorisation (see Fig.~\ref{fig:symvsnonsym}), which we believe is due to the following reasons: %
(i) On the one hand, from a theoretical perspective the factorisation $VH$ enables to get a better rank-$d$ approximation of ${W}$ (cf.~Lemma~\ref{lem:transconsmatrix}) compared to enforcing $H^T$ to be equal to $V$.
(ii) On the other hand, %
the unsymmetric NMF optimises over a higher-dimensional space, such that it has more freedom during the optimisation and is thus less prone to unwanted local optima of the non-convex Problem~\eqref{eq:nmfOpt}. 
(iii) Furthermore, with the inherent clustering properties of NMF \cite{Ding:2005ey,Li:2006bw,Ding:2008fo,Yang:2012ul,Lu:2014ht,Yang:2016ui}, %
Problem~\eqref{eq:nmfOpt} can also be understood from the clustering point-of-view
 (cf.~Sec.~\ref{sec:clustering}). %
In the clustering perspective, the columns of the matrix $V$ can be seen as the \emph{cluster centres}, where each column of ${W}$ is a conic combination of the columns of $V$, and the corresponding column of $H$ contains the coefficients. 
{%
Since swapping the roles of $V$ and $H$ is equivalent to factorising $W^T$ in place of $W$,
using either $V$ or $H$ for obtaining the cycle-consistent partial matchings from the \emph{unsymmetric factorisation} $VH$ are equivalent, as also demonstrated in~Fig.~\ref{fig:symvsnonsym}. Note that due to points (i) and (ii) it nevertheless is important that the factorisation is $unsymmmetric$ (cf.~Fig.~\ref{fig:symvsnonsym}).
}

The motivation for enforcing both $V$ and $H$ to be non-negative is as follows: when cycle-consistency holds, the columns of $V$ should be non-negative and mutually orthogonal, so that each row in $V$ can contain at most one non-zero element. Thus, if the factor matrix $H$ is such that $W = VH$, then, since $W$ is non-negative, $H$ needs to be non-negative. %

Next, we introduce our rotation scheme that is used for the initialisation of $V$ and $H$, as well as for the projection of $V$ onto $\mathcal{U}$.

\subsection{Rotation Scheme}\label{sec:projAfterRot}
For $X_i \in \R^{m_i \times d}$, $i \in [k]$, let $X = [X_1^T, \ldots, X_k^T]^T \in \R^{m \times d}$ be a rank-$d$ matrix that comprises a low-rank approximation of $W$, i.e.~$W \approx XX^T$. 
For any orthogonal matrix $Q$ we have that $XX^T = (XQ)(XQ)^T$, so that we can freely choose $Q$ and use $(XQ)(XQ)^T$ as low-rank approximation of $W$ in place of $XX^T$. The purpose of this section is to describe a procedure to find a $Q$, such that $XQ$ is closer to the set $\mathcal{U}$ compared to $X$, which is for example beneficial for performing a Euclidean projection of $X$ onto $\mathcal{U}$. 
To this end, we generalise the full-matching rotation scheme in \cite{Pachauri:2013wx}, which will be explained in the next paragraph, such that one can find a suitable orthogonal matrix $Q$ for the case of \emph{partial} matchings. 
\newcommand{\imScaleAA}{.4}
\newcommand{\imScale}{1}
\newcommand{\imSpace}{~~~~} 
\begin{figure*}[h!t!]%
  \centerline{%
         \subfigure[input]{\includegraphics[scale=\imScaleAA]{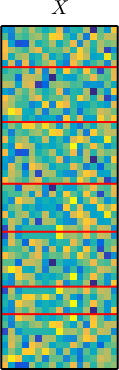}}\imSpace
         \subfigure[first iteration]{\includegraphics[scale=\imScaleAA]{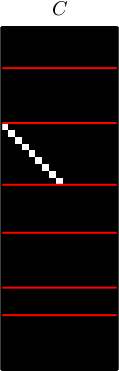}\includegraphics[scale=\imScaleAA]{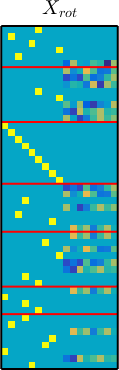}}\imSpace
         \subfigure[second iteration]{\includegraphics[scale=\imScaleAA]{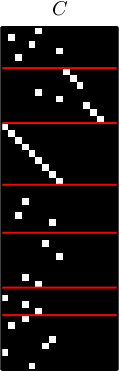}\includegraphics[scale=\imScaleAA]{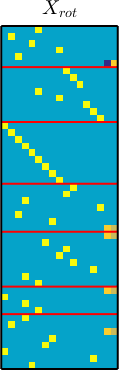}}\imSpace
         \subfigure[third iteration]{\includegraphics[scale=\imScaleAA]{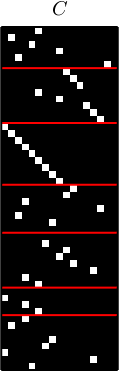}\includegraphics[scale=\imScaleAA]{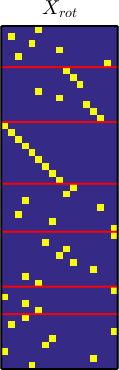}}
  }
\caption{{Illustration of the \emph{Successive Block Rotation Algorithm}. In each subimage the $k{=}7$ blocks are separated by red horizontal lines. (a) The input matrix $X$. (b) In the first iteration, the masking matrix $C$ is initialised so that the third block (which is the largest one) contains an $(m_3{\times}m_3)$-dimensional identity matrix. Solving Problem~\eqref{eq:Qsvd} results in the rotated $X_{\text{rot}}$. (c) In the second iteration, $C$ is updated such that on the one hand all active elements of the previous $X_{\text{rot}}$ remain active, and on the other hand all the inactive rows in the block with the largest number of inactive rows will be activated (in this case the second block). (d) The third iteration produces $X_{\text{rot}}$, where $X_{\text{rot}} \in \mathcal{U}$.}
\label{fig:sbra}
}
\end{figure*}

\paragraph{Challenges:} As discussed in Sec.~\ref{sec:existingSpectral}, in the case of full matchings, i.e.~$m_1 = \ldots = m_k = d$, the authors of \cite{Pachauri:2013wx} set $Q = X_i^T$ for one of the block indices $i \in [k]$, e.g.~$Q = X_1^T$, so that $XQ$ is close to a matrix that comprises blocks of permutations. 
This is based on the assumption that the pairwise matchings $XX^T$ are close to the ground truth, which in turn implies that (i) each $X_i^T$ is near-orthogonal, so that $X_iX_i^T \approx \matI_d \in \perm_d$ for all $i \in [k]$; and that (ii) there exists an orthogonal $Q \in \R^{d \times d}$ such that $XQ$ is close to comprising blocks of permutation matrices, so that
for all $ j \in [k] $ there exists a $P_j \in \perm_d$ such that $X_j X_i^T \approx P_j$. Essentially, due to (i) and (ii) it is ensured that $XQ$ is close to $\mathcal{U}$ whenever $Q = X_i^T$ for any $i \in [k]$. 

For partial matchings, point (i) is not valid anymore, because generally not \emph{all} the universe features are present in each object $i \in [k]$. Hence, the $X_i \in \R^{m_i \times d}$ are (generally) not orthogonal (as they are not even square matrices), from which it follows that $X X_i^T (X X_i^T)^T = XX_i^T X_i X^T \neq XX^T$. 
{For partial matchings, it is not sufficient to consider only a single block $X_i$ of $X$ for constructing $Q$. Instead, one needs to aggregate information from rows of $X$ that come from different blocks $X_1,\ldots,X_k$.
We tackle this using the \emph{Successive Block Rotation Algorithm} (\textsc{SBRA}), as we describe next.
}

{
\paragraph{Successive Block Rotation Algorithm:}  Similarly as in~\cite{Pachauri:2013wx}, we assume that a given $X$ forms a sufficiently good approximation $XX^T$ to the (unknown) ground truth matchings. With that, there must exist an orthogonal $Q$ such that $XQ$ is close to an element of $\mathcal{U}$, in which case each row of $XQ$ has a single element that is close to one, with all other elements being close to zero. When we make particular elements in $XQ$ close to one by rotating $X$ by $Q$, we say that we \emph{activate} these elements.
}

{
For finding a suitable orthogonal matrix $Q$ in the case of partial matchings, we successively select elements of $X$ that shall be activated.
Moreover, we ensure that at most one element in each row in $XQ$ is activated, so that all other elements in these rows become small (based on the above assumption).
To this end, we employ an $(m{\times}d)$-dimensional binary matrix $C$, which  has the purpose of masking those elements that shall become activated in the rotated $XQ$. 
For now, let us assume that we are given a $C\in \{0,1\}^{m{\times}d}$. With that, we consider the problem
\begin{align}\label{eq:Qsvd}
  Q := \argmax_{\bar{Q}^T\bar{Q} = \matI_d} \,\langle C, X\bar{Q} \rangle\,,
\end{align}
so that the orthogonal matrix $Q$ is chosen such that the elements of the rotated $XQ$ are as large as possible at the active positions $C$.
This problem can be solved by setting $Q=\bar{U} \bar{V}^T$, for $\bar{U}\bar{\Sigma}\bar{V}$ being the singular value decomposition (SVD) of $X^T C$.
For example, in the case of full matchings, when using $C = [\matI_{d}, \zerovec_{d,m{-}d}]^T$, the diagonal elements of the first block $X_1$ of $X$ are activated. With such a choice of $C$ we obtain $X^TC = X_1^T$, which corresponds to the rotation approach in~\cite{Pachauri:2013wx} with an additional SVD-based orthogonalisation of $X_1^T$. 
The important difference that makes our approach applicable to \emph{partial} matchings is that we successively construct the matrix $C$, rather than activating elements of a single block $X_i$ for some fixed $i$. 
The \emph{Successive Block Rotation Algorithm (\textsc{SBRA})} is summarised as follows: 
\begin{enumerate}[label=(\roman*)]
  \item First, we initialise $C$ to contain an $m_{\ell}{\times}m_{\ell}$ identity matrix in the $\ell$-th block, where $\ell = \argmax_{i} m_i$. All other elements of $C$ are zero.
  \item Given $C$, we obtain $Q$ by solving Problem~\eqref{eq:Qsvd}.
  \item Based on $X_{\text{rot}} = XQ$, we update $C$ so that the inactive rows of $X_{\text{rot}}$ chosen from the block with the largest number of inactive rows will be activated in the next step, as well as all active elements remain active.
\end{enumerate}
Step (ii) and (iii) are repeated until there are no further elements of $X_{\text{rot}}$ that shall become activated. We illustrate our algorithm in Fig.~\ref{fig:sbra}.
}

\newcommand{\minipageWidth}{.24} 

\subsection{Initialisation}\label{sec:nmfInit}
\begin{figure} %
\centering
	\includegraphics[scale=1]{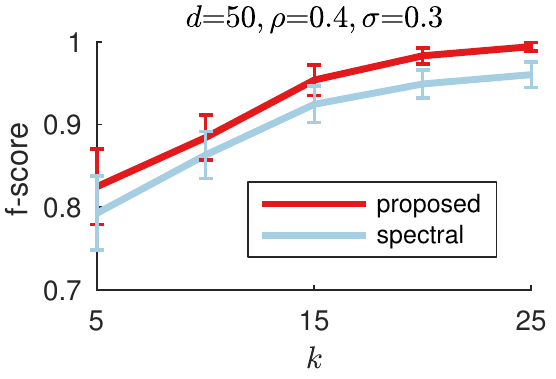}
	\vspace{-6mm}
	{\caption{Comparison of proposed vs.~spectral initialisation (cf.~Sec.~\ref{sec:synthData} for details).}
	\label{fig:spectralVsProposed} }
\end{figure}
Since Problem~\eqref{eq:nmfOpt} is non-convex, the initialisation of the matrices $V$ and $H$ plays a crucial role. We propose to initialise $V$ and $H$ based on a rotation of the spectral factorisation of the pairwise matching matrix ${W}$. %
Hence, we first compute the best rank-$d$ approximation of $W$ using eigendecomposition, so that $W \approx X X^T$, where $X \in \R^{m \times d}$ is the matrix of the (scaled) most dominant eigenvectors of $W.$
Subsequently, %
we rotate the columns of $X$  with $Q$, %
so that it becomes closer to $\mathcal{U}$, as described in Sec.~\ref{sec:projAfterRot}. 
Since we use an NMF algorithm based on multiplicative updates (cf.~Sec.~\ref{sec:alg}) that requires a non-negative initialisation, we set $V = (XQ)_+$ and $H = (XQ)_+^T$. 
In Fig.~\ref{fig:spectralVsProposed} we demonstrate that the proposed initialisation procedure is superior compared to using the spectral initialisation $X_+$.

\subsection{Projection onto $\mathcal{U}$}\label{sec:proj}
After solving Problem~\eqref{eq:nmfOpt} (with the algorithm described in Sec.~\ref{sec:alg}), we perform a projection-after-rotation,~i.e.~we find $Q$ {based on the \textsc{SBRA}} (Sec.~\ref{sec:projAfterRot}), and then project $V{Q}$ onto $\mathcal{U}$ to obtain $U$. 
This is done by solving $k$ (independent) linear assignment problems via the Auction algorithm \cite{Bertsekas:1998vt,Bernard:2016tv}. 
Moreover, similarly to existing approaches (e.g.~\cite{zhou2015multi,Maset:YO8y6VRb}), we prune bad matchings. 
To this end, we define a threshold $\theta \geq 0$ and remove all {multi-}matchings in $U$ where $V{Q} \odot U$ is smaller than $\theta$, for $\odot$ denoting the Hadamard product. {In order to ensure that $U \onevec_d = \onevec_m$, for each individual matching that is removed from a column of $U$, we add a new column to $U$ that contains all zeros apart from a single element being one---as such, this feature is now matched to its own universe feature (in the clustering perspective, it is a cluster comprising a single element, cf.~Fig.~\ref{fig:illustration}).}

\subsection{Algorithm}\label{sec:alg}
We call the overall synchronisation procedure \textsc{NmfSync}, which is summarised in Algorithm~\ref{nmfSync}. \textsc{NmfSync} comprises the following main steps: (i) initialisation of $V$ and $H$ (Sec.~\ref{sec:nmfInit}), (ii) minimisation of Problem~\eqref{eq:nmfOpt}, (iii) projection of $V$ onto $\mathcal{U}$ to obtain $U \in \mathcal{U}$ (Sec.~\ref{sec:proj}), and (iv) computation of the synchronised $W^{\text{sync}} = UU^T$.
\begin{algorithm}[h!]\label{nmfSync}
\scriptsize
\SetKwInput{Input}{Input}
\SetKwInput{Output}{Output}
\SetKwInput{Initialise}{Initialise}
\SetKwBlock{Repeat}{Repeat}{}

 \Input{$W \in \R^{m \times m}, d, \theta$}
 \Output{synchronised $W^{\text{sync}}$}%
\tcp{find best rank-$d$ approximation of $W$ (spectral method \cite{Pachauri:2013wx,Maset:YO8y6VRb})}
$[X,\Lambda] \leftarrow \operatorname{eig}(W,d)$,
$X \leftarrow X \Lambda^{0.5}$\\
\vspace{0.3mm}
\tcp{initialise according to Secs.~\ref{sec:projAfterRot}~and~\ref{sec:nmfInit}}
{$Q \leftarrow \operatorname{SBRA}(X)$}, $V \leftarrow (XQ)_+$, $H \leftarrow V^T$\\
\vspace{0.3mm}
\Repeat{\tcp{multiplicative updates of NMF \cite{Berry:2007jw}, $\epsilon > 0$ is a small number (numerics) }
  $H \leftarrow H \odot ((V^T W) \oslash ((V^TV)H + \epsilon))$ \tcp{$\oslash$ is element-wise division}
  $V \leftarrow V \odot ((W H^T) \oslash (W(HH^T) + \epsilon)$
}
\vspace{0.3mm}
\tcp{normalise so that the columns of $V$ and $H^T$ have the same $\ell_2$-norms}
$T \leftarrow \diag(\onevec_m^T (V \odot V))^{0.5}$,
$V \leftarrow  V T^{-1}$, $H = TH$\\
\vspace{0.3mm}
\tcp{project onto $\mathcal{U}$ according to Sec.~\ref{sec:proj}}
{$Q \leftarrow \operatorname{SBRA}(V)$}\\
$U \leftarrow \proj_{\mathcal{U}}(V {Q})$ \tcp{project $V {Q}$ onto $\mathcal{U}$ by solving  $k$ independent LAPs}
$U \leftarrow \operatorname{prune}(VQ, U,\theta)$ \tcp{prune uncertain matchings}
\vspace{0.3mm}
\tcp{compute synchronised $W$}
$W^{\text{sync}} \leftarrow UU^T$
 \caption{\textsc{NmfSync}}
\end{algorithm}

\section{Experiments}
In this section we evaluate the robustness of \textsc{NmfSync} and compare it against existing permutation synchronisation approaches.
To be more specific, we consider the \textsc{Spectral} method \cite{Pachauri:2013wx}, as implemented by the authors of \cite{zhou2015multi} to handle partial matchings based on a greedy rounding procedure, the \textsc{MatchEig} method \cite{Maset:YO8y6VRb}, and the \textsc{MatchALS} method \cite{zhou2015multi}. In our experiments we first consider synthetic data in a wide range of different configurations, followed by experiments on real data. We quantify the consistency of the pairwise matchings using the \emph{cycle-error} 
\begin{align}
  e_{\text{cycle}}(W) = \frac{1}{k^3}\sum_{i,j,\ell \in [k]} \| (P_{i\ell})_{R_{i\ell},:} (P_{\ell j})_{:,C_{\ell j}} - (P_{ij})_{R_{i\ell},C_{\ell j}}\|_F \,,
\end{align}
where for $i,j \in [k]$, the sets $R_{ij} \subseteq [m_i]$ and $C_{ij} \subseteq [m_j]$ denote the indices of non-zero rows and columns of $P_{ij}$, respectively. We use the ground truth error $e_{\text{gt}}$ (\emph{gt-error}) to measure the discrepancy between a given $W$ and the ground truth pairwise matchings $W_{\text{gt}}$, which we define as
  $e_{\text{gt}}(W) = \| W - W_{\text{gt}} \|_F$.
The $\text{f-score} = \frac{2\cdot\text{precision} \cdot \text{recall}}{\text{precision} + \text{recall}}$ summarises the \emph{precision} and \emph{recall}.

\subsection{Synthetic Data}  
\label{sec:synthData}
For our synthetic data experiments we generate the pairwise matchings $W$ for a given number of objects $k$, the universe size $d$, the observation rate $\rho$, and the error rate $\sigma$ as follows: For each $i \in [k]$, we first sample a random {(full)} permutation matrix {$P_i \in \perm_{d}$}. {To obtain a \emph{partial} permutation,  we remove each row of $P_i$ with probability $1{-}\rho$. As such, the number $m_i$ is implicitly determined by $\rho$, where the average of the $m_i$ is $\bar{m} = \rho d$.} Eventually, the ground truth matrix of cycle-consistent matchings is obtained as $W_{\text{gt}} = [P_{ij}]_{i,j \in [k]} = [P_i P_j^T]_{i,j \in [k]}$. We obtain the noisy matrix of pairwise matchings $W$ by perturbing each block $P_{ij}$ of $W_{\text{gt}}$ individually by randomly selecting a proportion of $\sigma$ of the rows of $P_{ij}$,
and then shuffle the selected rows. Note that %
 we perturb $W_{\text{gt}}$ in a symmetric fashion.
For each evaluated configuration, we draw $100$ samples of $W$ and report the averaged results.

\paragraph{Sensitivity Analysis:}
In Fig.~\ref{fig:sensitivityAnalysis} we present results of our sensitivity analysis with respect to the choice of the threshold parameter $\theta$, as well as to the choice of the estimate of the universe size $d$ that is used as additional input to all the methods. For a wide range of thresholds $\theta$ our method results in a smaller \emph{gt-error} compared to the other methods while providing \emph{cycle-consistent} results. Moreover, our method outperforms the other methods for varying universe sizes $d$ .

\newcommand{\figScaleB}{.7}
\begin{figure*} %
  \centerline{ \subfigure{\includegraphics[scale=\figScaleB]{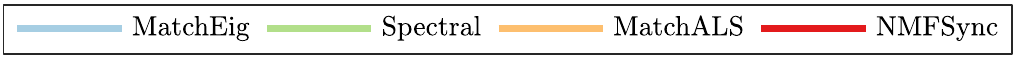} }}
  \vspace{-2mm}
     \centerline{%
      \rotatebox[origin=l]{90}{~~~\scriptsize{sensitivity to $\theta$}} \hfil
        \subfigure{\includegraphics[scale=\figScaleB]{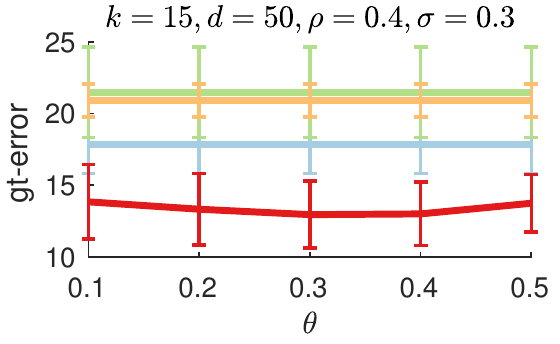}} \hfil %
        \subfigure{\includegraphics[scale=\figScaleB]{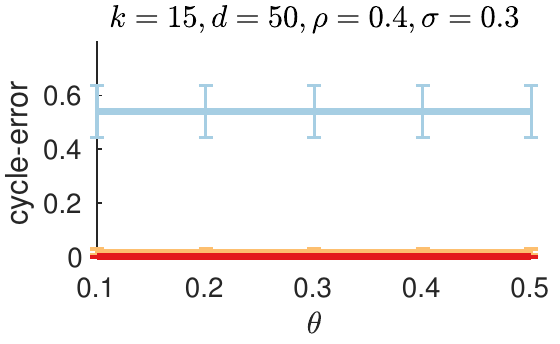}} \hfil %
        \subfigure{\includegraphics[scale=\figScaleB]{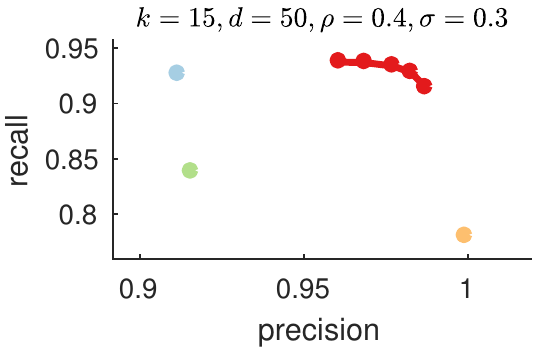}} \hfil %
      } 
      \vspace{-2mm}
      \centerline{%
      \rotatebox[origin=l]{90}{~~~~\scriptsize{sensitivity to $d$}} \hfil
        \subfigure{\includegraphics[scale=\figScaleB]{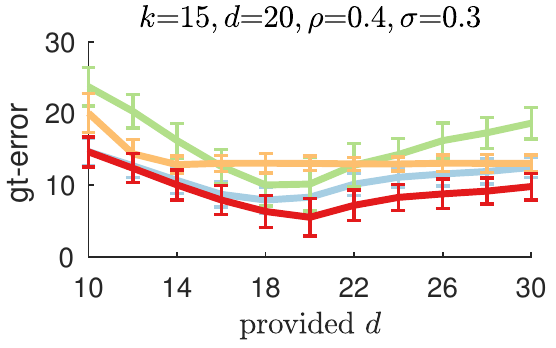}} \hfil %
        \subfigure{\includegraphics[scale=\figScaleB]{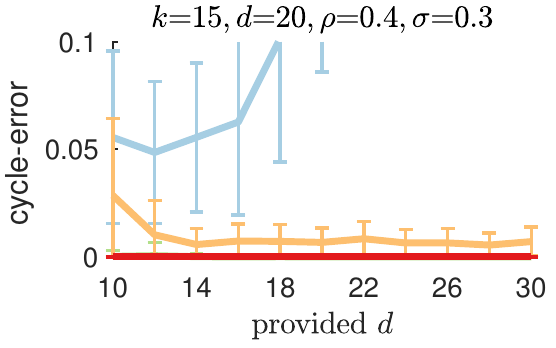}} \hfil %
        \subfigure{\includegraphics[scale=\figScaleB]{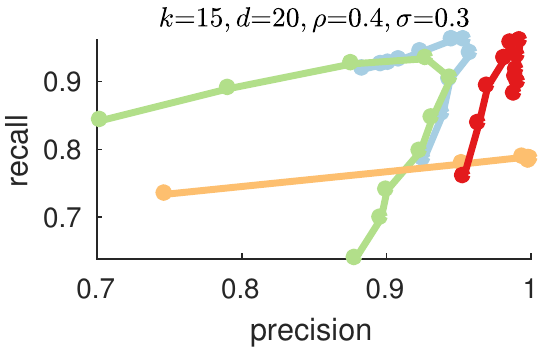}} \hfil %
      }%
      \vspace{-5mm}
    {\caption{Analysis of the sensitivity of our method to the choice of the threshold parameter $\theta \in \{0.1, 0.2, 0.3, 0.4, 0.5\}$ (top row), and the sensitivity of all methods to the provided universe size $d \in \{10, 12, \ldots, 30\}$ (bottom row, in this case the true $d$ is $20$).}
    \label{fig:sensitivityAnalysis}
    }
\end{figure*}

\paragraph{Comparison to Other Methods:}
\newcommand{\figScaleSynth}{.7}
\begin{figure*}[t!]%
  \centerline{\subfigure{\includegraphics[scale=\figScaleSynth]{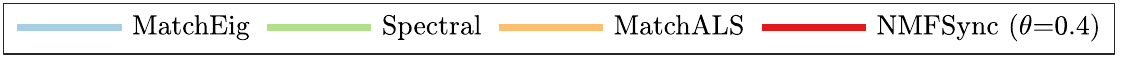}}}%
  \vspace{-3mm}
     \centerline{%
        \rotatebox[origin=l]{90}{$\quad$\scriptsize{cycle-error}} \hfil
        \subfigure{\includegraphics[scale=\figScaleSynth]{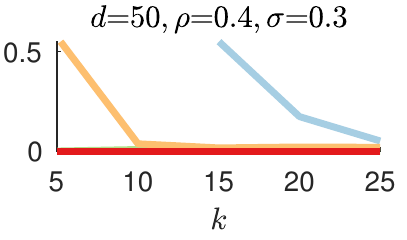}} \hfil
        \subfigure{\includegraphics[scale=\figScaleSynth]{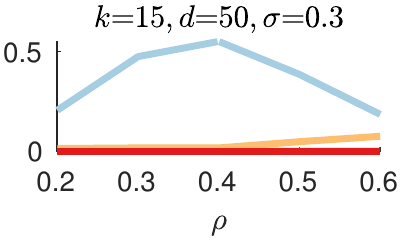}} \hfil
        \subfigure{\includegraphics[scale=\figScaleSynth]{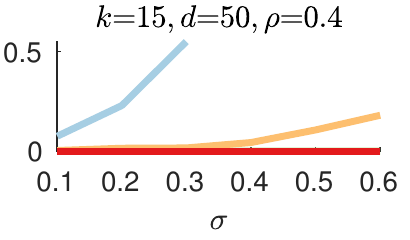}} \hfil
        \subfigure{\includegraphics[scale=\figScaleSynth]{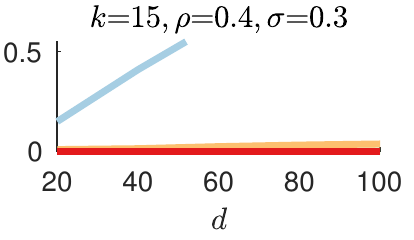}}
     }
     \vspace{-2mm}
      \centerline{%
       \rotatebox[origin=l]{90}{$\quad~~$\scriptsize{gt-error}} \hfil
        \subfigure{\includegraphics[scale=\figScaleSynth]{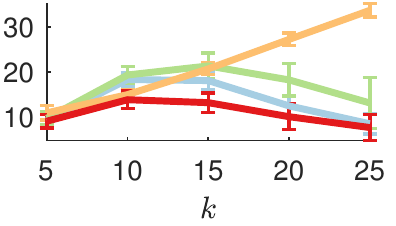}} \hfil
        \subfigure{\includegraphics[scale=\figScaleSynth]{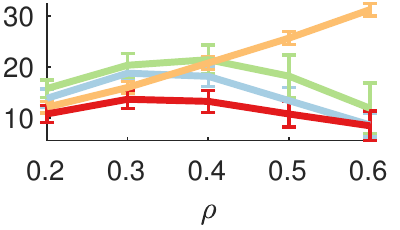}} \hfil
        \subfigure{\includegraphics[scale=\figScaleSynth]{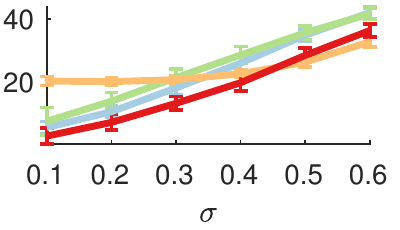}} \hfil
        \subfigure{\includegraphics[scale=\figScaleSynth]{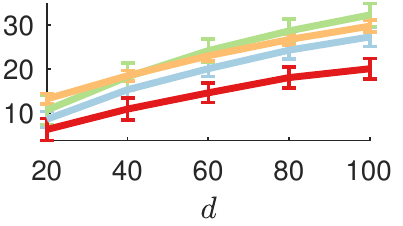}}
     }
      \vspace{-2mm}
      \centerline{%
       \rotatebox[origin=l]{90}{$\quad~~$\scriptsize{f-score}} \hfil
        \subfigure{\includegraphics[scale=\figScaleSynth]{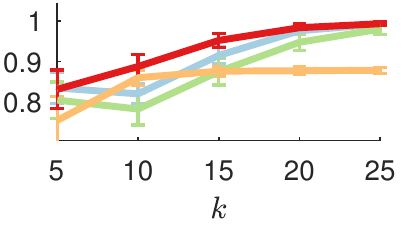}} \hfil
        \subfigure{\includegraphics[scale=\figScaleSynth]{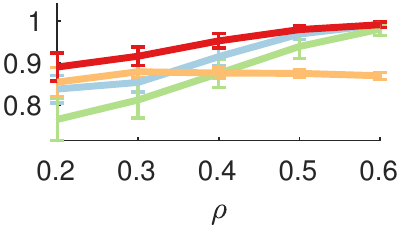}} \hfil
        \subfigure{\includegraphics[scale=\figScaleSynth]{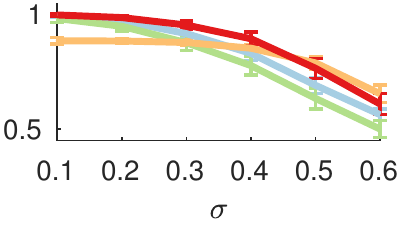}} \hfil
        \subfigure{\includegraphics[scale=\figScaleSynth]{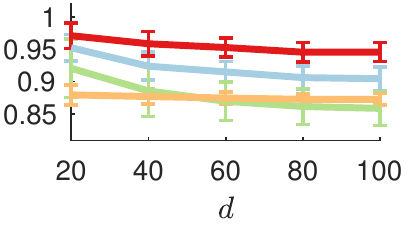}}
     }
     \vspace{-3mm} 
     \centerline{%
         \rotatebox[origin=l]{90}{$\quad$\scriptsize{\#matchings}} \hfil
        \subfigure{\includegraphics[scale=\figScaleSynth]{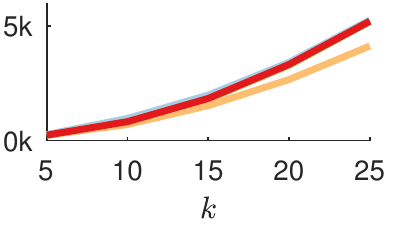}} \hfil
        \subfigure{\includegraphics[scale=\figScaleSynth]{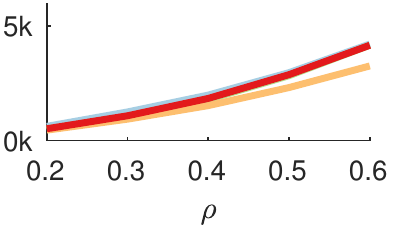}} \hfil
        \subfigure{\includegraphics[scale=\figScaleSynth]{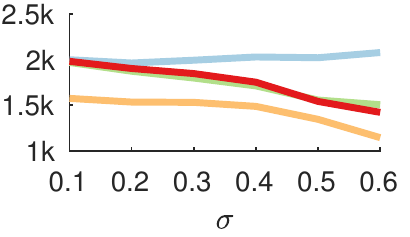}} \hfil
        \subfigure{\includegraphics[scale=\figScaleSynth]{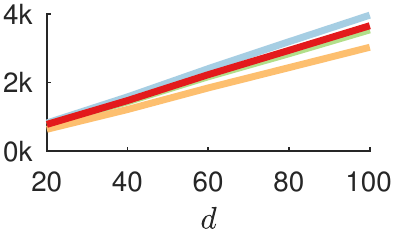}}
      }
      \vspace{-5mm}
    {\caption{Quantitative results for synthetic data for different varying parameters on the horizontal axis (the number of objects $k$, the observation rate $\rho$, the error rate $\sigma$, and the universe size $d$). The \emph{cycle-error}  of  \textsc{NmfSync} is always $0$. For the synthetic data experiments, the cycle-error of \textsc{Spectral} is also $0$.
    Considering the \emph{gt-error} (or analogously the \emph{f-score}) and the \emph{cycle-error}, \textsc{NmfSync} is clearly superior compared to its competitors.}
    \label{fig:syntheticResults}}
\end{figure*} 
The results of this experiments are shown in Fig.~\ref{fig:syntheticResults}, where the rows show the \emph{cycle-error}, the \emph{gt-error},  the \emph{f-score}, and the number of matchings (\emph{$\#$matchings}); and the columns show four different evaluation scenarios where in each scenario a different parameter varies along the horizontal axis. While \textsc{MatchEig} and \textsc{MatchALS} generally result in a non-zero \emph{cycle-error},~i.e.~the matchings are \emph{not cycle-consistent}, the \textsc{NmfSync} method guarantees cycle-consistent matchings. It can be seen that the overall result quality of \textsc{NmfSync} is superior compared to the other methods.

\subsection{Real Data}
\newcommand{\figScale}{.55}
\newcommand{\figScaleC}{.8}
\begin{figure*} %
  \centerline{ \subfigure{\includegraphics[scale=\figScale]{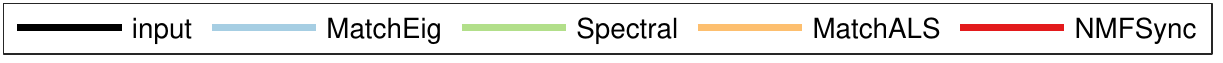} }}
  \vspace{-2mm}
     \centerline{%
     \rotatebox[origin=l]{90}{$\qquad~~$\scriptsize{FCM}} \hfil
        \subfigure{\includegraphics[scale=\figScale]{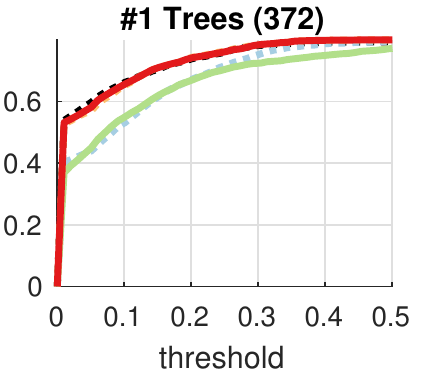}}%
        \subfigure{\includegraphics[scale=\figScale]{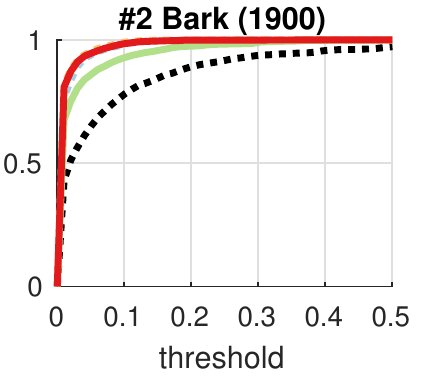}}\hfil%
        \subfigure{\includegraphics[scale=\figScale]{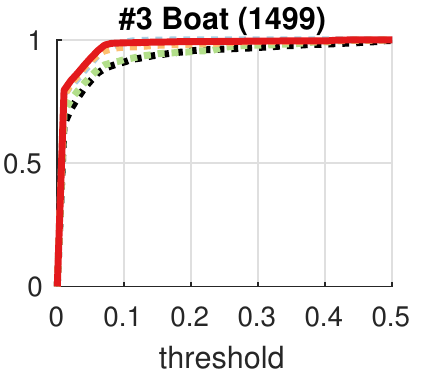}}\hfil%
        \subfigure{\includegraphics[scale=\figScale]{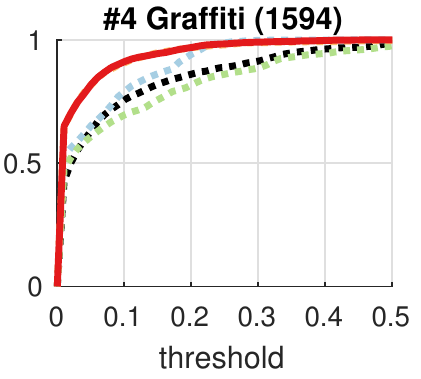}}\hfil%
        \subfigure{\includegraphics[scale=\figScale]{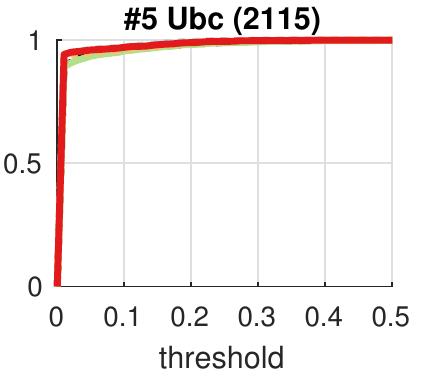}}\hfil%
        \subfigure{\includegraphics[scale=\figScale]{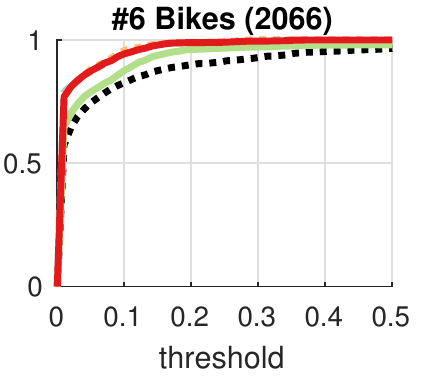}}\hfil%
      }
      \vspace{-3mm}
    \centerline{%
    \rotatebox[origin=l]{90}{$\qquad~~$\scriptsize{FCM}} \hfil
        \subfigure{\includegraphics[scale=\figScale]{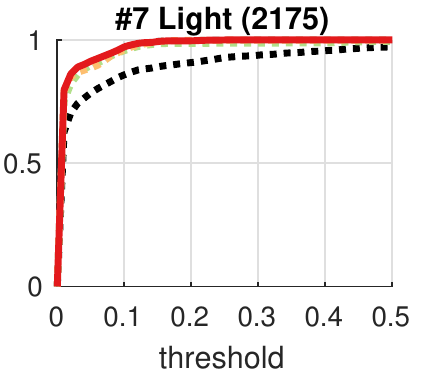}}\hfil
        \subfigure{\includegraphics[scale=\figScale]{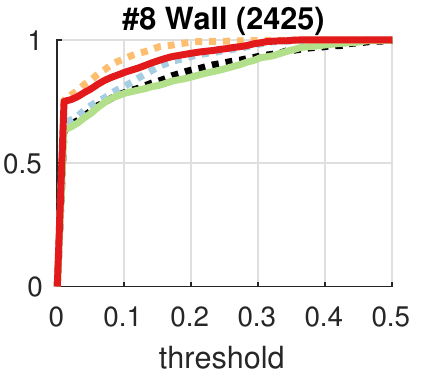}}\hfil 
        \subfigure{\includegraphics[scale=\figScale]{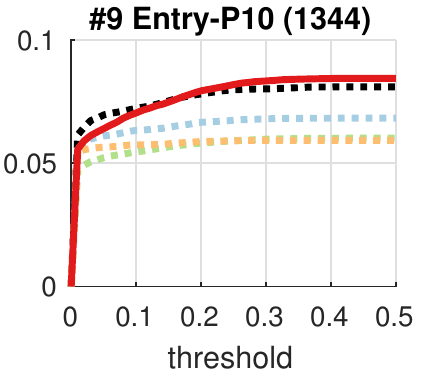}}\hfil \hfil
        \subfigure{\includegraphics[scale=\figScale]{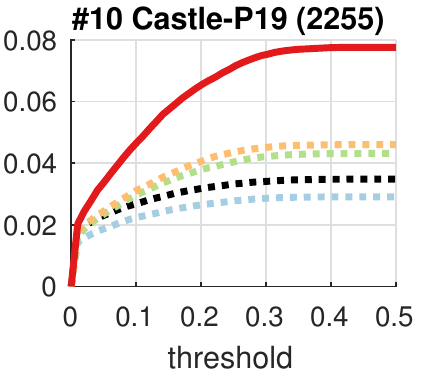}}\hfil
        \subfigure{\includegraphics[scale=\figScale]{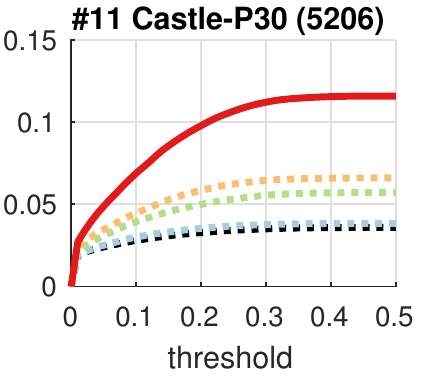}}\hfil 
        \subfigure{\includegraphics[scale=\figScale]{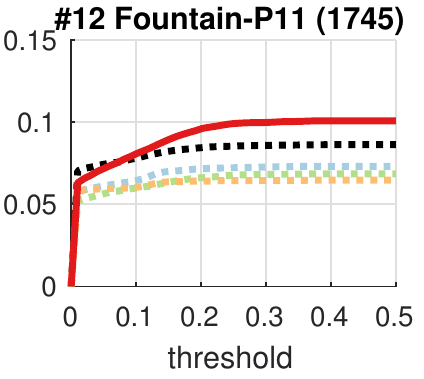}}%
     }
     \vspace{-3mm}
     \centerline{%
     \rotatebox[origin=l]{90}{$\qquad~~$\scriptsize{FCM}} \hfil
      \subfigure{\includegraphics[scale=\figScale]{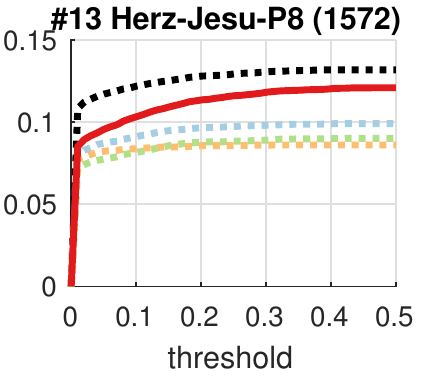}}\hfil%
      \subfigure{\includegraphics[scale=\figScale]{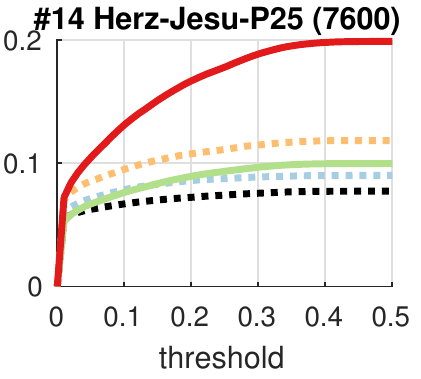}}\hfil%
      \subfigure{\includegraphics[scale=\figScale]{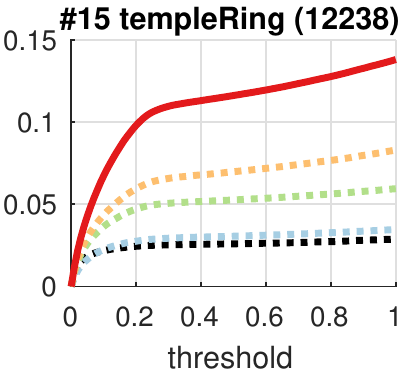}}\hfil%
      \subfigure{\includegraphics[scale=\figScale]{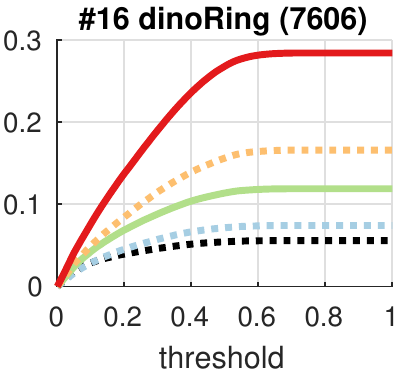}}\hfil%
      \subfigure{\includegraphics[scale=\figScale]{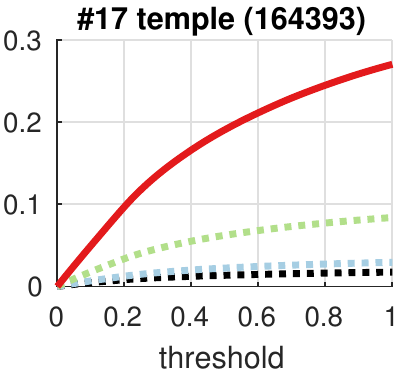}}\hfil%
      \subfigure{\includegraphics[scale=\figScale]{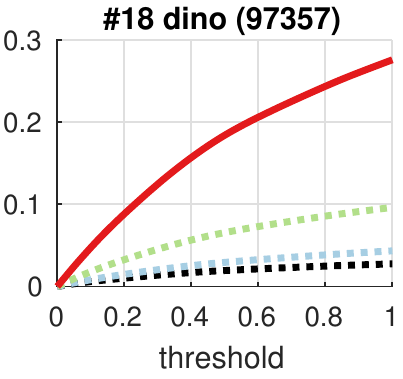}}%
       }
       \vspace{-3mm}
     \centerline{%
       \subfigure{\includegraphics[scale=\figScaleC]{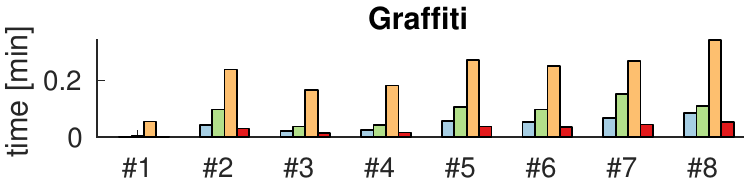}} \hfil %
       \subfigure{\includegraphics[scale=\figScaleC]{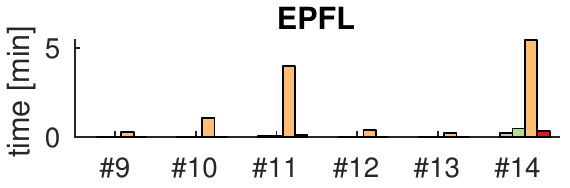}} \hfil %
       \subfigure{\includegraphics[scale=\figScaleC]{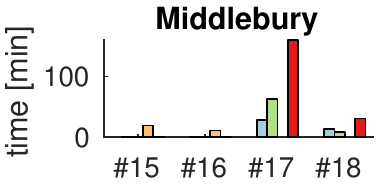}} %
       }
       \vspace{-5mm}
    {\caption{Results for the datasets \textsc{Graffiti} (\#1--\#8, $\theta {=} 0.4$),  \textsc{EPFL} (\#9--\#14, $\theta {=} 0$), and \textsc{Middlebury} (\#15--\#18, $\theta {=} 0$). Each plot shows the fraction of correct matchings (FCM) that have an error smaller than or equal to the threshold on the vertical axis (relative to the {largest image dimension}). The solid lines show results that are cycle-consistent, whereas the dashed lines show results that do not exhibit cycle-consistency. The title of each plot shows the size of the pairwise matching matrix $m$ in parentheses. 
    Considering FCM and cycle-error at the same time, \textsc{NmfSync} is superior compared to other approaches.}
    \label{fig:graffitiResults}}
\end{figure*} 

In our second set of experiments we consider real-world matching problems based on the \textsc{Graffiti} \cite{Mikolajczyk:2005ds}, \textsc{EPFL} \cite{Strecha:je} and the \textsc{Middlebury} \cite{Seitz:ft} datasets, all of which come with ground truth registrations. Our evaluations are based on the well-established protocol of \cite{zhou2015multi}, %
which was for example also used in \cite{Tron:kUBrCZhd}. To obtain the pairwise matchings $W$, we first extract SIFT features \cite{Lowe:2004kp} from the images, and then obtain the pairwise matchings based on simple nearest neighbour matching. Then, we use the so-obtained pairwise matchings as input to the synchronisation methods. We consider the \emph{fraction of correct matchings} (FCM), which indicates the fraction of matchings  that have an error less than a specified threshold. {Since the true number of correct matchings is unknown (cf.~\cite{Tron:kUBrCZhd}), the FCM is computed relative to the number of image features, as done in \cite{zhou2015multi}.}

\paragraph{Results:} In Fig.~\ref{fig:graffitiResults} we show quantitative results. The first three rows show the FCM
for the individual problem instances \#1 to \#18, where the solid lines indicate cycle-consistent results (\textsc{NmfSync}) and the dashed lines indicate cycle-inconsistent matchings (all other methods, with the exception of \textsc{Spectral} in a few instances). 
{Note that the multi-image matching problems in the \textsc{Graffiti} dataset are  easier compared to the \textsc{EPFL} and \textsc{Middlebury} datasets, as the overlap of the visible object parts in the \textsc{Graffiti} images are much larger. This also explains that the values of the (relative) FCM scores in the other two datasets are lower (in this case the number of features in an image is an overly conservative upper bound for the true number of matchings).}
Considering the FCM and cycle-consistency, \textsc{NmfSync} clearly outperforms the other methods.
For the moderately-sized problem instances \#1 to \#16, where $m$ is between $372$ and $12{,}238$, all methods have comparable runtimes, with the exception of \textsc{MatchALS} being substantially slower. Note that \textsc{MatchALS} cannot be used for processing the very large instances \#17 and \#18 due to its unscalability in terms of memory (cf.~Sec.~\ref{sec:discussion}).

\subsection{Discussion \& Limitations}\label{sec:discussion}
Due to the pruning of uncertain matchings in \textsc{NmfSync} based on $\theta$ (Sec.~\ref{sec:proj}), the total number of obtained matchings of \textsc{NmfSync} varies depending on the input quality. For example, the third column in Fig.~\ref{fig:syntheticResults} illustrates that when increasing the error rate while keeping other parameters fixed, the number of matchings returned by \textsc{NmfSync} decreases. This reflects that our method implicitly takes into account the larger input corruption. Note that other methods also prune uncertain matches.

While the Auction algorithm \cite{Bertsekas:1998vt} for solving the LAP has (roughly) cubic \emph{worst-case} complexity \cite{Bertsekas:1988cr}, the analysis in \cite{Schwartz:1994db} suggests that the \emph{average} complexity is in the regime {$\mathcal{O}(d^2 \log d)$. Our rotation scheme involves the computation of an SVD with complexity $\mathcal{O}(d^3)$. Both, the LAP and the SVD are solved $\mathcal{O}(k)$ times. 
We have observed that the spectral decomposition and the NMF algorithm, with \emph{per-iteration} complexity $\mathcal{O}(m^2d)$, usually dominate the overall runtime.}
 In contrast to \textsc{MatchALS}, our method never requires the computation of the dense and large $m \times m$ matrix $VH$ (cf.~Alg.~\ref{nmfSync}), such that \textsc{NmfSync} is much more memory efficient. With that, our method is able to handle very large problem instances, as we show in Fig.~\ref{fig:graffitiResults} for instances \#17 and \#18, where $m$ goes up to ${\approx}160{,}000$.

{One property that is common to all existing synchronisation methods is that they only consider given (partial) matchings without explicitly incorporating any \emph{higher-order information} (such as distances between pairs of features positions). While in certain applications ignoring higher-order information is desirable (e.g.~when it is simply not available), in other cases such information could be leveraged to obtain more reliable matchings. Hence, albeit being computationally challenging, we believe that the incorporation of higher-order terms (e.g.~in the spirit of the QAP) into synchronisation problem formulations is an interesting direction for future work.
}

\section{Conclusions}
Based on a non-negative factorisation of the matrix of pairwise matchings, we have presented the \textsc{NmfSync} method for the synchronisation of partial permutation matrices. {We have found that even though the ground truth pairwise matching matrix $W$ is symmetric, from a computational perspective it is actually beneficial to perform an unsymmetric factorisation (cf.~Fig.~\ref{fig:symvsnonsym}).}
In order to deal with the non-convexity of our formulation, we have proposed a novel scheme for rotating the solution of the spectral relaxation such that it %
provides a suitable initialisation for the NMF. %
Moreover, we have generalised the projection-after-rotation approach of the \textsc{Spectral} method~\cite{Pachauri:2013wx}, so that it can handle \emph{partial} matchings (Sec.~\ref{sec:proj}). 
In contrast to the \textsc{MatchALS} method~\cite{zhou2015multi}, and the more recent \textsc{MatchEig} method~\cite{Maset:YO8y6VRb}, our approach is guaranteed to produce a cycle-consistent solution. Since cycle-consistency is an intrinsic property of the (unknown) true matchings, we argue that it is important to achieve.
{Furthermore, we have demonstrated that \textsc{NmfSync} is comparable to existing methods in terms of scalability, and that it quantitatively outperforms existing approaches on various datasets. Due to these favorable properties, we believe that \textsc{NmfSync} is a significant contribution towards the (sub)field of (partial) permutation synchronisation.
}

$\newline$
\noindent\paragraph{Acknowledgements:}
This work was funded by the ERC Starting Grant CapReal~(335545), the ERC Consolidator Grant 4DRepLy (770784), and by the Luxembourg National Research Fund (FNR, C14/BM/8231540).

\section*{References}
\bibliographystyle{elsarticle-harv}
\bibliography{extracted}

\end{document}